\documentclass{article}

\usepackage[preprint]{paper_2026}
\usepackage{amsmath}

\usepackage[utf8]{inputenc} % allow utf-8 input
\usepackage[T1]{fontenc}    % use 8-bit T1 fonts
\usepackage{hyperref}       % hyperlinks
\usepackage{url}            % simple URL typesetting
\usepackage{booktabs}       % professional-quality tables
\usepackage{amsfonts}       % blackboard math symbols
\usepackage{nicefrac}       % compact symbols for 1/2, etc.
\usepackage{microtype}      % microtypography
\usepackage{xcolor}         % colors

\usepackage{graphicx}
\usepackage{subcaption}
\usepackage{hyperref}
\usepackage{url}
\usepackage{graphicx}
\usepackage{enumitem}
\usepackage{hyperref}       % hyperlinks
\usepackage{url}            % simple URL typesetting
\usepackage{booktabs}       % professional-quality tables
\usepackage{amsfonts}       % blackboard math symbols
\usepackage{nicefrac}       % compact symbols for 1/2, etc.
\usepackage{microtype}      % microtypography
\usepackage[table]{xcolor}
\usepackage{amsmath}    
\usepackage{amssymb}   
\usepackage{bm}   
\usepackage[most]{tcolorbox}
\usepackage{subcaption}
\usepackage{amsthm}
\usepackage{amsmath}
\usepackage{amssymb}
\usepackage{wrapfig}
\usepackage{xcolor}
\usepackage{tabularx}
\usepackage{multirow}
\usepackage{array}       
\usepackage{makecell}    
\usepackage{color}
\usepackage{float}
\definecolor{myblue}{RGB}{232,242,255}

\theoremstyle{plain}
\newtheorem{theorem}{Theorem}[section]

\theoremstyle{definition}
\newtheorem{definition}[theorem]{Definition}

\theoremstyle{remark}

\title{Breaking the Tokenizer Barrier: On-Policy Distillation across Model Families}
\title{Breaking the Tokenizer Barrier: On-Policy Distillation across Model Families}

\author{%
  \textbf{Yifan Niu$^{1}$}\thanks{Equal contribution.} \quad
  \textbf{Han Xiao$^{1}$}\footnotemark[1] \quad
  \textbf{Dongyi Liu$^{1}$} \quad
  \textbf{Zelong Wang$^{2}$}
  \\
  \textbf{Dihong Gong$^{2}$} \quad
  \textbf{Yasheng Wang$^{2}$}\thanks{Correspondence to: Yasheng Wang (\texttt{asheryswang@tencent.com}) and Jia Li (\texttt{jialee@ust.hk}).} \quad
  \textbf{Jia Li$^{1,3}$}\footnotemark[2]
  \\
  $^{1}$The Hong Kong University of Science and Technology (Guangzhou)
  \\
  $^{2}$Tencent
  \\
  $^{3}$The Hong Kong University of Science and Technology
}

\begin{document}

\maketitle

\begin{abstract}
On-Policy Distillation (OPD) has become a core technique in the post-training of Large Language Models (LLMs) for transferring knowledge from domain experts to student models. However, existing OPD distillation methods require teacher and student models to share the same tokenizer, restricting the applicability of OPD within the model series. Current mainstream practice typically employs Supervised Fine-Tuning (SFT) on teacher-generated responses for cross-tokenizer distillation, which fails to capture the rich knowledge embedded in the teacher's probability distribution. In this work, we enable the standard on-policy distillation method to operate across model families, ensuring that high-fidelity token-level signals can propagate across different tokenizers with precise token-mapping algorithm. Extensive experiments show that cross-tokenizer OPD is significantly more compute-efficient than baselines on various benchmarks. Our results unlock a broader range of teacher–student pairs for OPD, opening up new avenues for adapting and enhancing interactions between LLMs. The code is available at \href{https://github.com/ivanniu/On-Policy-Distill}{https://github.com/ivanniu/On-Policy-Distill}.
\end{abstract}

\section{Introduction}

On-Policy Distillation (OPD)~\cite{zhao2026selfdistilledreasoneronpolicyselfdistillation,yang2026learningteachergeneralizedonpolicy,sdpo2026reinforcementlearningselfdistillation} has become a core technique in the post-training of Large Language Models (LLMs) for transferring knowledge from experts to student models. OPD is known for its computational efficiency and serves as a tool for continual learning. Leading industry models, such as Qwen3~\cite{yang2025qwen3technicalreport}, MiMo~\cite{coreteam2026mimov2flashtechnicalreport}, and GLM-5~\cite{glm5team2026glm5vibecodingagentic}, have all integrated OPD into their post-training stage and reported significant improvements.  Thinking Machines Lab~\cite{lu2025onpolicydistillation} also validated the efficacy of OPD: with a mere fraction of the computational cost, they successfully replicated Qwen3’s OPD recipe, which demonstrates that "on-policy, dense supervision" is an efficient and effective alternative.

In contrast to off-policy distillation~\cite{jiao2020tinybertdistillingbertnatural,wang2020minilmdeepselfattentiondistillation}, which trains a student model using static sequences generated by a teacher model and suffers from exposure bias, OPD trains on the student's rollouts and leverages the teacher’s per-token log-probabilities as a dense reward signal to refine the student's behavior. However, existing OPD methods~\cite{ko2025distillm2contrastiveapproachboosts,kim2026doesselfdistillationsometimesdegrade,xu2025kdrlposttrainingreasoningllms} confine this token-level supervision to same-family distillation, where the teacher and student models share an identical tokenizer. When OPD across model families, the misalignment of vocabulary (differing in size, encoding preferences, and the handling of special characters) creates a significant barrier, as raw log-probabilities cannot be directly transferred. Cross-tokenizer OPD remains a challenge that has yet to be thoroughly explored.

Cross-tokenizer alignment has recently become feasible due to the convergence of industrial tokenization standards. Most contemporary language models rely on subword-level tokenization algorithms~\cite{sennrich2016neuralmachinetranslationrare,kudo2018sentencepiecesimplelanguageindependent}, specifically Byte-Pair Encoding (BPE)~\cite{zouhar2024formalperspectivebytepairencoding}, to partition text into discrete tokens. Although different tokenizers employ vocabularies with distinct encoding characteristics~\cite{Nawrot_2023}, they all share the underlying Unicode/byte-level representation~\cite{xue2022byt5tokenfreefuturepretrained,pagnoni2024bytelatenttransformerpatches}. 
Furthermore, despite the discrepancy in actual token IDs, different tokenizers partition the same underlying text into comparable semantic and character-level fragments, which provides a structural foundation for synchronization. These commonalities effectively lower the barriers between different vocabulary spaces, making cross-tokenizer distillation possible.

In this work, we address the challenge of cross-tokenizer on-policy distillation and propose an adaptive chunk alignment algorithm and credit assignment framework that bridges the vocabulary gap across different model families.  We propose a dual-pointer chunk alignment algorithm to identify the minimal synchronized chunks between the student and teacher sequences. Building upon this alignment, we introduce a chunk-level credit assignment mechanism based on semantic priors. We derive a closed-form solution that scales the student’s log-probabilities, which preserves the student's internal semantic structure while adhering to the guidance of the teacher model. To the best of our knowledge, it is the first work to enable high-fidelity, per-token OPD across different model families. Extensive experiments show that cross-tokenizer OPD is significantly more compute-efficient than off-policy distillation on various benchmarks. Our results unlock a broader range of teacher–student pairs for OPD, opening up new avenues for adapting and enhancing interactions between LLMs.

\section{Related Work}
\subsection{On-Policy Distillation} 
On-Policy Distillation (OPD) optimizes student models by providing dense teacher supervision on self-generated trajectories to have better distribution alignment and performance. In previous works, MiniLLM~\cite{gu2026minillmonpolicydistillationlarge} pioneered on-policy refinement by employing a reverse KL objective to enforce mode-seeking behavior. GKD~\cite{agarwal2024onpolicydistillationlanguagemodels} then established a unified framework by formalizing the sampling distribution as a flexible interpolation between student trajectories and teacher data across diverse divergences. DistiLLM~\cite{ko2024distillmstreamlineddistillationlarge} later stabilized the paradigm by introducing skewed KL divergences, leading to a more tractable optimization process.
Recent research~\cite{song2026surveyonpolicydistillationlarge} classifies feedback signals in OPD into logits-based, outcome-based, and self-play rewards. Logits-based signals~\cite{jung2025toditokenwisedistillationfinegrained,xu2026paceddistillationonpolicyselfdistillation} provide token-level alignment by evaluating student outputs against the teacher's continuous probability distribution. To prevent the computational overhead of full logit, outcome-based methods~\cite{xu2025kdrlposttrainingreasoningllms,zhang2025aligndistiltokenlevellanguagemodel} use sequence-level reward signals. Moreover, self-play methods~\cite{chen2024selfplayfinetuningconvertsweak,zhao2026selfdistilledreasoneronpolicyselfdistillation} allow models to act as both student and teacher, enabling teacher-free alignment without external intervention.
However, these alignments require student and teacher models are the same tokenizer, restricting the applicability of OPD within the model series.

\subsection{Cross-Tokenizer Distillation} Recent research on cross-tokenizer distillation can be divided into two categories: logit-level alignment and text-level alignment. Logit-level alignment methods aim to make the logits of teachers and students in different vocabulary spaces comparable by transforming or reconstructing the prediction distribution. ULD~\cite{boizard2025crosstokenizerdistillationuniversallogit} and MultiLevelOT~\cite{cui2025multileveloptimaltransportuniversal} model cross-tokenizer Logit Alignment as an optimal transport problem, mitigating vocabulary mismatch by transferring probability quality between heterogeneous vocabularys; while CDM~\cite{Li_2024} utilizes entropy-weighted dynamic programming for context-dependent sequence alignment and combines it with dynamic vocabulary mapping to construct a comparable logit space. Text-level alignment methods use text units as a bridge between teachers and students. ALM~\cite{minixhofer2025universalcrosstokenizerdistillationapproximate} directly uses coarse-grained word chunks as the smallest unit to transmit probabilistic information, while BLD~\cite{singh2026crosstokenizerllmdistillationbytelevel} aims for fine-grained byte-level alignment, introducing an additional decoder to reconstruct byte-level probabilistic information. However, existing methods are mostly designed for offline distillation strategies and cannot effectively solve the token-level signal propagation problem in OPD.

\section{Preliminaries} 

\textbf{Notation.}  Let $x = \{x_1, \ldots, x_n\}$ denotes an input query and $y = \{y_1, \ldots, y_m\}$ is the response. The prefix of tokens prior to step $t$ is defined as $y_{<t} = (y_1, \ldots, y_{t-1})$. We consider two language models: a student $\pi_\theta$
and a teacher $\pi_T$, each defining a next-token distribution
$\pi(\cdot \mid x, y_{<t})$ over a student vocabulary $\mathcal{V}_S$ or teacher vocabulary $\mathcal{V}_T$.
We use $y \sim \pi_\theta(\cdot \mid x)$ to represent a response
sampled autoregressively from the student,
and $\mathcal{D}_x =\{x^{(i)}\}_{i=1}^N$ denotes the corresponding query dataset.

\subsection{Tokenization} Tokenization functions fuse bytes into larger (e.g., subword) tokens. Given the text $x$ and the teacher tokenization functions $T(\cdot)$, it encode the text to a set of tokens $T(x) = \{t_1, t_2, \dotsc,t_n\}$. Then we have $t_1\odot t_2 \odot \dotsc \odot t_n = x$, where $\odot$ is the concatenation operator.  Similarly, the student tokenization function $S(\cdot)$ maps the text to $S(x) = \{s_1, s_2, \dotsc, s_m\}$.  Furthermore, the tokenization function is injective, and there exists a left inverse $D$ such that $x = D(T(x)),\forall x$. We call $D$ the \textit{detokenization function}. Importantly, $D$ is only a \textit{left} inverse, i.e., there may exist a token sequence $\bm{t}$ such that $\bm{t} \neq T(D(\bm{t}))$ since $T$ is not necessarily bijective. Generally, the vocabulary does not contain "empty tokens", i.e., $|D(t_i)| > 0$ for any single token $t_i \in \mathcal{V}$. This ensures the strict monotonicity of the detokenization function, i.e., $|D(\bm{t} \odot t')| > |D(\bm{t})|$. For simplicity, we do not distinguish between teacher $D_T$ and student $D_S$ in the following and denote them both as $D$.

\subsection{On-Policy Distillation}
On-Policy Distillation (OPD) computes supervision on trajectories
sampled from the current student $\pi_\theta$. Given a prompt
$x \sim \mathcal{D}_x$, the student samples a response
$\hat{y} = (\hat{y}_1, \ldots, \hat{y}_{T})
  \sim \pi_\theta(\cdot \mid x)$.
Both models are then evaluated on the student-generated prefixes
$\hat{y}_{<t}$, yielding two next-token distributions at each
step~$t$:
$\pi_\theta(v \mid x, \hat{y}_{<t})$ and
$\pi_T(v \mid x, \hat{y}_{<t})$
for $v \in \mathcal{V}$. In this work, we follow the thinking machines lab~\cite{lu2025onpolicydistillation} and adopt the token-level reverse KL as the objective function. This is the lightweight variant of OPD, evaluating only the tokens sampled by the student model; it also represents the most common implementation in previous on-policy distillation works~\cite{lu2025onpolicydistillation,coreteam2026mimov2flashtechnicalreport,yang2025qwen3technicalreport}. We set the per-token advantage to the negative reverse KL, the divergence between the student’s and teacher’s distribution for each token conditioned on the same prior trajectory:
\begin{equation}
\hat{A}_{i,t} = \log \pi_T(\hat{y}_{i,t} \mid x, \hat{y}_{i,<t}) - \log \pi_{\theta_{\mathrm{old}}}(\hat{y}_{i,t} \mid x, \hat{y}_{i,<t}).
\end{equation}
The reward function minimizes the reverse KL, which pushes the student to approximate the teacher’s behavior in every state the student finds itself in. Then we call the RL importance-sampling loss function to perform the training update on the student model:
\begin{equation}\label{eq:grpo}
\small
\begin{aligned}
\mathcal{J}_\text{OPD}& =\ 
\mathbb{E}_{x \sim \mathcal{D},\, y \sim \pi_{\theta_{\mathrm{old}}}} 
  \min\left(
    r_{i,t}(\theta)\, \hat{A}_{i,t},\, 
    \mathrm{clip}\left(
      r_{i,t}(\theta),\, 1{-}\epsilon,\, 1{+}\epsilon
    \right)\, \hat{A}_{i,t}
  \right),
\end{aligned}
\end{equation}
where $r_{i,t}(\theta) = \frac{\pi_\theta(y_{i,t}|x, y_{i,<t})}{\pi_{\theta_\mathrm{old}}(y_{i,t}|x, y_{i,<t})}$ represents the importance-sampling term, and $\epsilon$ is the clipping hyper-parameter.

\section{Method}
In this section, we introduce the method for cross-tokenizer On-Policy Distillation. The challenge lies in the discrepancy between the student vocabulary and the teacher vocabulary, which prevents a direct token-to-token mapping of supervision signals. We first introduce our proposed chunk alignment algorithm to bridge this gap, followed by a chunk-level credit assignment based on semantic prior.

\subsection{Adaptive Chunk Alignment}
\setlength{\columnsep}{5pt}
\begin{wrapfigure}{r}{0.45\linewidth}
\begin{center}
\vspace{-0.6cm}
\includegraphics[width=0.45\textwidth]{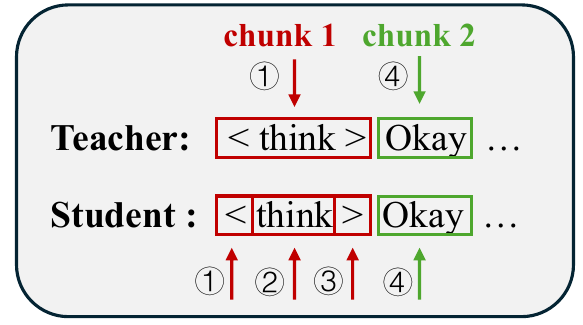}
\end{center}
\vspace{-0.2cm}
\caption{Adaptive Chunk Alignment. }
\vspace{-0.4cm}
\label{HNGFNEXP}
\end{wrapfigure}
Given a student-generated response $y^S = \{s_1, \dots, s_m\} \sim \pi_\theta(\cdot \mid x)$ and its teacher-tokenized counterpart $y^T = \{t_1, \dots, t_k\} = T(D(y^S))$, our goal is to identify synchronization points between the two sequences. Due to differing subword boundaries, a single token in one sequence may correspond to a partial or multiple tokens in the other. We propose the \textbf{Dual-Pointer Chunk Alignment (DPCA)} algorithm, which leverages the deterministic and monotonic properties of the detokenization function $D$ to find the \textit{minimal synchronized chunks}.

\begin{definition}[Synchronization Point]
\label{def:sync_point}
An index pair $(i, j)$ is a \textit{synchronization point} if the decoded prefixes match: $D(y^S_{1:i}) = D(y^T_{1:j})$. 
\end{definition}

\textbf{Algorithm Description.} We maintain two pointers, $p_S$ and $p_T$, representing the current boundaries in the student and teacher sequences, respectively. Starting from any synchronization points $p_S$ and $p_T$ (initialized to $0$), we seek the smallest increments $\Delta m, \Delta k \ge 1$ such that:
\begin{equation}
D({s_{p_S}, \dots, s_{p_S + \Delta m - 1}}) = D({t_{p_T}, \dots, t_{p_T + \Delta k - 1}}).
\end{equation}
To find these increments, we employ a greedy "catch-up" strategy based on the length of the decoded strings. Within a chunk, let $i$ be the current offset for the student token and $j$ be the offset for the teacher token. We iteratively advance the pointers as follows:
\begin{itemize}\item Let $\text{str}_S = D(\{s_{p_S}, \dots, s_{p_S + i - 1}\})$ and $\text{str}_T = D(\{t_{p_T}, \dots, t_{p_T + j - 1}\})$.
\item \textbf{If} $|\text{str}_S| < |\text{str}_T|$, we increment $i$ to extend the student's fragment.
\item \textbf{Else if} $|\text{str}_S| > |\text{str}_T|$, we increment $j$ to extend the teacher's fragment.
\end{itemize}
The process for a single chunk terminates at the first instance where $\text{str}_S = \text{str}_T$. Upon matching, we define the aligned pair as a \textit{synchronized chunk}, update the base pointers $p_S \leftarrow p_S + i$ and $p_T \leftarrow p_T + j$, and repeat the process until both sequences are exhausted.

\textbf{Theoretical Analysis of DPCA.}
To ensure high-fidelity knowledge transfer, the alignment mechanism must operate at the highest possible resolution. Any alignment that aggregates multiple independent synchronization points into a single block will obscure the dense supervision signals provided by the teacher. By leveraging the strict monotonicity of the detokenization function $D$, the DPCA algorithm effectively ensures the completeness and minimality of identified chunks through asynchronous greedy search. In other words, DPCA ensures the highest partition resolution. Detailed proof are provided in Appendix~\ref{app:proof}.

\begin{theorem}[Completeness and Minimality of DPCA]
\label{thm:minimal_partition}
The DPCA algorithm ensures: 1.\textbf{Completeness}: It identifies the entire set of synchronization points between $y^S$ and $y^T$. 2.\textbf{Minimality}: The resulting partitions are minimal synchronized chunks, such that no chunk can be further subdivided into smaller synchronized sub-sequences.
\end{theorem}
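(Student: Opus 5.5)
The plan is to reduce everything to a statement about prefix lengths. Let $\ell_S(i) = |D(y^S_{1:i})|$ and $\ell_T(j) = |D(y^T_{1:j})|$. By the no-empty-token assumption, both maps are strictly increasing. Since $y^T = T(D(y^S))$, both sequences decode to the same string $z$, and I will use that $D$ is a concatenation homomorphism on tokens, i.e.\ $D(\bm{a}\odot\bm{b}) = D(\bm{a})\odot D(\bm{b})$. Then $D(y^S_{1:i})$ and $D(y^T_{1:j})$ are both prefixes of $z$, so $(i,j)$ is a synchronization point in the sense of Definition~\ref{def:sync_point} if and only if $\ell_S(i) = \ell_T(j)$. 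This converts the problem to locating common values of two strictly increasing integer sequences $0=\ell_S(0)<\dots<\ell_S(m)=|z|$ and $0=\ell_T(0)<\dots<\ell_T(k)=|z|$. Under this reformulation, DPCA is exactly the merge procedure for two sorted lists: it advances whichever pointer has the smaller cumulative length. Within a chunk, $|\text{str}_S|=\ell_S(p_S+i)-\ell_S(p_S)$, and the same holds for the teacher side. Because $(p_S,p_T)$ is a synchronization point, comparing the chunk lengths is the same as comparing the absolute lengths $\ell_S(p_S+i)$ and $\ell_T(p_T+j)$. Also, equality of the chunk lengths forces equality of the chunk strings, since both are the same substring of $z$.

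For \textbf{completeness}, I will prove by induction on the number of pointer moves the invariant that no common value $v$ with $\ell_S(p_S+i) , \ell_T(p_T+j) > v$ has been skipped. The key step is the following. Suppose the algorithm increments $i$ because $\ell_S(p_S+i)<\ell_T(p_T+j)$. Then any common value $v$ lying strictly between $\ell_S(p_S+i)$ and the next student value would have to equal some $\ell_T(j')$ with $j'\ge p_T+j$, again by the invariant. That gives $v\ge \ell_T(p_T+j)$. Moreover, $v$ itself will be reached, because the student pointer keeps advancing until its value is at least $\ell_T(p_T+j)$. The symmetric argument handles teacher increments. Termination follows because each step increases $i+j$, and both sequences end at $|z|$, which supplies the final synchronization point $(m,k)$. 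Hence every pair with $\ell_S(i)=\ell_T(j)$ is visited as a chunk boundary. Strict monotonicity is what makes each common value correspond to a \emph{unique} index pair, so no synchronization point can be counted twice.

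For \textbf{minimality}, suppose an emitted chunk between consecutive boundaries $(a,b)$ and $(a',b')$ could be split into smaller synchronized sub-sequences. Such a split is given by $(i,j)$ with $a< i< a'$ or $b<j<b'$ and with the matching substrings equal. Since the prefix up to $(a,b)$ is synchronized, adding the equal substrings shows that $(i,j)$ is also a synchronization point. It lies strictly between two consecutive boundaries, contradicting completeness. I also need the case where only one coordinate is strictly interior. There, strict monotonicity rules out $i=a$ with $j>b$, because $\ell_T(j)>\ell_T(b)=\ell_S(a)$.

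The main obstacle is justifying the reformulation. This requires $D$ to act by concatenation on token sequences, whereas the paper only states injectivity and strict length monotonicity. With byte-level BPE or multi-byte characters, a token can decode to a partial UTF-8 sequence, and then decoding a chunk in isolation may differ from slicing $z$. I would first make the homomorphism assumption explicit by taking $D$ at the byte level. Under that assumption, equal lengths of prefixes of the same string immediately give equal strings, and the rest of the argument is the standard merge-of-sorted-lists argument.
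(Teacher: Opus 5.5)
Your proposal is correct and follows the paper's proof. Completeness holds because the first-match greedy search can never pass over a synchronization point, and minimality follows from completeness; your prefix-length reformulation as a merge of sorted lists supplies the step the paper only asserts (that the pointers ``eventually reach'' $(i^*,j^*)$), and the concatenation property of $D$ you flag is the one the paper assumes implicitly via $t_1\odot\cdots\odot t_n=x$ and when it infers $D(y^S_{i_r:i^*})=D(y^T_{j_r:j^*})$. One tightening: your key step as worded (a common value strictly between consecutive student values) is vacuous, and a min-type invariant does not show that passed teacher values lie below the current student value, so use $\ell_T(p_T+j-1)<\ell_S(p_S+i)$ and $\ell_S(p_S+i-1)<\ell_T(p_T+j)$ instead, which each move preserves and which rules out any common value being passed.
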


The \textit{completeness} guarantees that the algorithm never skips a synchronization point.  The \textit{minimality} implies that each identified chunk is a minimal closure. Consequently, the resulting partition has found all possible minimal chunks. This is crucial for OPD as it ensures that the teacher's log-probabilities are localized to the smallest possible token clusters, which avoids the ambiguous signal over excessively large chunks.

\subsection{Chunk-level Credit Assignment via Semantic Prior}

Once the sequences are partitioned into a set of minimal synchronized chunks $\mathcal{P} = \{ (\bm{s}_c, \bm{t}_c) \}_{c=1}^K$, the teacher's supervision signals should be projected back onto the student tokens. The projection, also referred to as credit assignment, must satisfy the total probability constraint while respecting the student's internal semantic structure.

\textbf{Joint Probability Matching.} Since each synchronized chunk $(\bm{s}_c, \bm{t}_c)$ represents the same content $x_{chunk} = D(\bm{s}_c) = D(\bm{t}_c)$, we assume that the joint probability of generating this content should be equivalent under both the student and teacher distributions:
\begin{equation}
P(\bm{s}_c \mid x, y^S_{<c}) = P(\bm{t}_c \mid x, y^T_{<c}),
\end{equation}
where $y^S_{<c}$ and $y^T_{<c}$ are the prefixes to the current chunk. Under the autoregressive factorization, this equality implies a product of conditional probabilities:
\begin{equation}\prod_{i=1}^{|\bm{s}_c|} \pi_{\theta}(s_{c,i} \mid x, y^S_{<c} \odot s_{c,<i}) = \prod_{j=1}^{|\bm{t}_c|} \pi_T(t_{c,j} \mid x, y^T_{<c} \odot t_{c,<j}).
\end{equation}
Applying the logarithm transform, we obtain the constraint that the sum of log-probabilities within a synchronized chunk should be equal:
\begin{equation}
\label{eq:sum_constraint}
\sum_{i=1}^{|\bm{s}_c|} \log \pi_{\theta}(s_{c,i} \mid \dots) = \sum_{j=1}^{|\bm{t}_c|} \log \pi_T(t_{c,j} \mid \dots).
\end{equation}

\textbf{Optimization with Semantic Prior.} In cross-tokenizer OPD, the teacher's density is aggregated over the teacher tokens $\bm{t}_c$. Let $\mathcal{L}_T^{(c)} = \sum_{j} \log \pi_T(t_{c,j} \mid \dots)$ be the target log-likelihood for the $c$-th chunk. To distribute this target to student tokens $\bm{s}_c = \{s_1, \dots, s_m\}$ while preserving the student's semantic prior $\bm{p} = \{ \pi_{\theta_{\text{old}}}(s_i \mid \dots) \}_{i=1}^m$ (i.e., its pre-existing internal distribution), we seek a target distribution $\bm{q}$ that maintains the maximum directional alignment with the prior $\bm{p}$:
\begin{equation}
\begin{aligned}
\min_{\bm{q}} \quad & \frac{1}{2} \sum_{i=1}^m \left( \log q_i - k \log p_i \right)^2 \quad
\text{s.t.} \ & \sum_{i=1}^m \log q_i = \mathcal{L}_T^{(c)}.
\end{aligned}
\end{equation}
By solving this constrained optimization problem with Lagrange multipliers, we obtain a closed-form solution where the target log-probabilities $\bm{q}$ are distributed proportionally to the student's prior:
\begin{equation}
\log q_i = \frac{\mathcal{L}_T^{(c)}}{\mathcal{L}_S^{(c)}}  \log p_i .
\end{equation}
where $\mathcal{L}_S^{(c)} = \sum_{j=1}^m \log p_j$ is the student's total log-likelihood for the chunk. The target $\log q_i$ is a linearly scaled version of the prior $\log p_i$, where the scaling factor is determined by the ratio of the chunk-level log-probabilities.

\textbf{Advantage Computation.} With the target log-probability, the per-token advantage is calculated as:
\begin{equation}\label{esadv}
\hat{A}_{i,t} = \log q_i - \log p_i = \left( \frac{\mathcal{L}_T^{(c)}}{\mathcal{L}_S^{(c)}} - 1 \right) \log p_i,
\end{equation}
\textbf{Remark.} This proportional assignment ensures that the relative semantic structure within the chunk is preserved. For instance, tokens that the student initially found more "surprising" (i.e., larger negative log-probabilities) receive a proportionally larger share of the total advantage. Furthermore, this formulation is self-consistent: in the case of 1:1 alignment ($m=1$), the advantage naturally simplifies to $\hat{A}_{i,t} = \mathcal{L}_T^{(c)} - \mathcal{L}_S^{(c)}$, recovering the standard per-token distillation objective. 

\subsection{Cross-Tokenizer On-Policy Distillation}

With the token-level advantages $\hat{A}_{i,t}$ established via chunk-level credit assignment, we integrate them into an on-policy framework to update the student model. Our objective function follow the standard form of OPD, and call the RL importance-sampling loss function:
\begin{equation}
\small
\begin{aligned}
\mathcal{J}_\text{OPD}& =\ 
\mathbb{E}_{x \sim \mathcal{D},\, y \sim \pi_{\theta_{\mathrm{old}}}} 
  \min\left(
    r_{i,t}(\theta)\, \hat{A}_{i,t},\, 
    \mathrm{clip}\left(
      r_{i,t}(\theta),\, 1{-}\epsilon,\, 1{+}\epsilon
    \right)\, \hat{A}_{i,t}
  \right),
\end{aligned}
\end{equation}
where $r_{i,t}(\theta) = \frac{\pi_\theta(y_{i,t}|x, y_{i,<t})}{\pi_{\theta_\mathrm{old}}(y_{i,t}|x, y_{i,<t})}$ represents the importance-sampling term, $\epsilon$ is the clipping hyper-parameter, and  $\hat{A}_{i,t}$ is the per-token advantage assigned via the semantic prior (Eq. \ref{esadv}). The ovrall objective ensures that the student model is updated in such a way that, even if its tokenization differs from that of the teacher model, its internal semantic structure remains consistent with the supervision of the teacher model. Training on the student model's own rollouts, the model is able to learn from and correct its unique distributional errors within its own generative context.

\section{Experiments}
\label{sec:experiments}

% In this section, we designed experiments to demonstrate our method and prove its effectiveness.

\subsection{Experimental Setup}
\label{sec:exp_setup}

\textbf{Settings.} To evaluate the performance of cross-tokenizer OPD, we considered two settings. \textbf{(1)} Qwen3-8B~\cite{yang2025qwen3technicalreport} and Llama-3.1-8B-SFT~\cite{grattafiori2024llama3herdmodels}, where Qwen3-8B acts as the teacher and Llama-3.1-8B-SFT as the student, test the effectiveness of cross-tokenizer OPD across different model families and tokenizers. \textbf{(2)} DeepSeek-R1~\cite{Guo_2025} and DeepSeek-R1-Distill-Qwen-7B, where DeepSeek-R1 acts as the teacher and DeepSeek-R1-Distill-Qwen-7B as the student, examine whether cross-tokenizer OPD can still effectively transmit token‑level signals from powerful teacher model to student model.
In all settings, the student first obtains an initial model through supervised fine-tuning, and then further trains it on on-policy trajectories sampled by the student itself using the OPD method proposed in this paper. Specially, in SFT stage, we train student models on 400K samples from OpenThoughts by LLaMA-Factory~\cite{zheng2024llamafactoryunifiedefficientfinetuning}.
In cross-tokenizer OPD stage, we implemented training process based on VeRL~\cite{sheng2024hybridflow} and deployed a separate teacher service to score the on-policy trajectories generated by the student online. More hyperparameter information is provided in the Appendix~\ref{appendix:details}.

\textbf{Training data.}
The training data primarily comes from OpenThoughts~\cite{guha2025openthoughtsdatarecipesreasoning} and DeepMath~\cite{he2025deepmath103klargescalechallengingdecontaminated}. OpenThoughts covers various tasks (\emph{e.g.}, math, code,  science question, and general reasoning), which includes high quality data from 26 sources and uses QwQ-32B~\cite{qwq32b} for all annotations.
DeepMath provides more challenging, verifiable, and decontaminated math problems with rich data format and data diversity. We selected 400K samples from OpenThoughts to initialize the student's SFT, and respectively extracted 20K prompts from OpenThoughts and DeepMath for OPD training.

% off policy 分层分点
\textbf{Baselines.}
To demonstrate the effectiveness of our method, we compare it against: (1) traditional method (SFT); (2) previous cross-tokenizer methods, (\emph{e.g.}, ALM~\cite{minixhofer2025universalcrosstokenizerdistillationapproximate} and CDM~\cite{Li_2024}). ALM uses surface-form blocks as bridging units between the teacher and student, passing teacher likelihoods to the student at the block level, while CDM employs a context-dependent dynamic mapping strategy to construct finer-grained likelihood matching between heterogeneous tokenizers. Unlike these off-policy or static matching methods, our OPD is trained directly on the on-policy trajectories generated by the student, utilizing the teacher's dense token-level signals. 

% We compare three training methods: SFT, SFT+ALM~\cite{minixhofer2025universalcrosstokenizerdistillationapproximate}, and SFT+CDM~\cite{Li_2024}. SFT uses OpenThoughts data for supervised fine-tuning of the student, serving as a common initialization for all subsequent methods. SFT+ALM uses surface-form blocks as bridging units between the teacher and student, passing teacher likelihoods to the student at the block level. SFT+CDM further employs a context-dependent dynamic mapping strategy to construct finer-grained likelihood matching between heterogeneous tokenizers.

\textbf{Benchmarks and evaluation.} We evaluated the model's performance on AIME24, AIME25, AIME26~\cite{aime26}, MATH-500~\cite{hendrycksmath2021}, GPQA-Diamond~\cite{rein2023gpqagraduatelevelgoogleproofqa}, and LiveCodeBench~\cite{jain2024livecodebench}. AIME and MATH-500 were used to measure mathematical reasoning ability, GPQA-Diamond to measure scientific question answering and complex knowledge reasoning ability, and LiveCodeBench to measure code generation and algorithmic reasoning ability. For AIME and MATH-500, we used answer extraction and exact-match judgment; for GPQA-Diamond, we reported multiple-choice accuracy; and for LiveCodeBench, we reported the test case accuracy. More details are in the Appendix~\ref{app:data}.

% 放setting
% \textbf{Implementation details.}
% The experiment consisted of two phases. The first phase involved SFT initialization of the student tokens, where we trained the student checkpoint using LLaMA-Factory~\cite{zheng2024llamafactoryunifiedefficientfinetuning} on 400K OpenThoughts samples. The second phase involved cross-tokenizer distillation. We implemented an OPD training process based on VeRL~\cite{sheng2024hybridflow} and deployed a separate teacher service to score the on-policy trajectories generated by the student online. During training, the student was responsible for sampling and parameter updates, while the teacher only provided log-probability supervision. For cases where the teacher and student tokenizers were inconsistent, we performed text-level chunk alignment within each sequence and projected the teacher's chunk-level probability budget onto the student token sequence.

\subsection{Main Results}
\label{sec:main_results}

% Overall, SFT provides a strong initialization for the student models, while ALM and CDM further improve performance by matching teacher likelihoods across heterogeneous tokenizers. However, these methods remain off-policy which cannot provide dense teacher supervision on the student's own sampled outputs. That's why OPD consistently achieves the best average performance in both teacher--student settings.

\begin{table*}[!t]
\centering
\caption{Main results on reasoning benchmarks. The best result within each block is highlighted in \textbf{bold}, and the second-best result is \underline{underlined}.}
\footnotesize
\setlength{\tabcolsep}{5pt}
\renewcommand{\arraystretch}{1.03}
\begin{tabular}{l|cccc|cc|c}
\toprule[1.3pt]
\textbf{Model} & \multicolumn{4}{c|}{\textbf{Math}} & \multicolumn{2}{c|}{\textbf{Science / Code}} & \textbf{Avg.} \\
\cmidrule(lr){2-5}\cmidrule(lr){6-7}\cmidrule(lr){8-8}
& \textbf{AIME24} & \textbf{AIME25} & \textbf{AIME26} & \textbf{MATH} & \textbf{GPQA-D} & \textbf{LCB} & \textbf{Avg.} \\
\midrule
\multicolumn{8}{l}{\textbf{Qwen3-8B $\rightarrow$ Llama3.1-8B-SFT}} \\
\midrule
+ SFT & 38.5 & 31.7 & 36.5 & 80.8 & 1.5 & 21.8 & 35.1 \\
+ SFT + ALM & 39.0 & \underline{32.5} & 37.1 & 81.2 & 2.0 & 22.4 & 35.7 \\
+ SFT + CDM & \underline{39.2} & 32.3 & \underline{37.3} & \underline{81.4} & \underline{2.5} & \underline{22.9} & \underline{35.9} \\
\rowcolor{myblue}+ \textbf{SFT + OPD} & \textbf{44.4} & \textbf{43.3} & \textbf{41.7} & \textbf{82.6} & \textbf{7.6} & \textbf{24.8} & \textbf{40.7} \\
\midrule
\multicolumn{8}{l}{\textbf{DeepSeek-R1 $\rightarrow$ DeepSeek-R1-Distill-Qwen-7B}} \\
\midrule
+ SFT & 53.3 & 34.2 & 46.7 & 87.4 & 46.9 & 24.3 & 48.8 \\
+ SFT + ALM & 53.8 & 35.0 & 47.3 & \underline{88.0} & \underline{47.6} & 24.0 & 49.3 \\
+ SFT + CDM & \underline{54.0} & \underline{35.6} & \underline{48.0} & 87.8 & 47.4 & \underline{24.7} & \underline{49.6} \\
\rowcolor{myblue}+ \textbf{SFT + OPD} & \textbf{56.7} & \textbf{42.5} & \textbf{50.8} & \textbf{89.0} & \textbf{48.1} & \textbf{25.1} & \textbf{52.0} \\
\bottomrule
\end{tabular}
\label{tab:main_results}
\end{table*}

Table~\ref{tab:main_results} presents the main results across both cross-tokenizer distillation settings. In the Qwen3-8B $\rightarrow$ Llama3.1-8B-SFT setting, the SFT-initialized student attains an average of 35.1. ALM and CDM yield modest improvements (35.7 and 35.9), indicating that static cross-tokenizer likelihood alignment provides limited gains. OPD raises the average to 40.7, surpassing SFT, ALM, and CDM by 5.6, 5.0, and 4.8 points respectively. The gains are consistent across all benchmarks, with particularly notable improvements on AIME25 (+11.6) and GPQA-Diamond (+6.1), confirming that dense on-policy teacher feedback delivers a stronger supervisory signal than static off-policy alignment.

In the DeepSeek-R1 $\rightarrow$ DeepSeek-R1-Distill-Qwen-7B setting, the student starts from a stronger initialization (48.8 average). ALM and CDM again provide marginal improvements (49.3 and 49.6), while OPD reaches 52.0, outperforming CDM by 2.4 points on average and by 2.7-6.9 points on individual AIME benchmarks. The consistent improvement across MATH-500, GPQA-Diamond, and LiveCodeBench demonstrates that the method generalizes beyond a single teacher--student pair: even with a well-initialized student, on-policy token-level supervision produces substantial gains.

Overall, OPD achieved the best performance across all benchmarks under both settings. The key distinction between OPD and ALM or CDM lies in the fact that OPD employs a teacher strategy to evaluate trajectories sampled from a student strategy, rather than having the student train against a fixed sequence generated by the teacher. These results confirm cross-tokenizer distillation can indeed effectively transmit dense teacher supervision signals with a chunk-based alignment mechanism and a log probability-preserving attribution mechanism, thereby achieving superior distillation results.

\subsection{Compute Cost Analysis}
\label{sec:compute_cost}

To evaluate the computational efficiency of OPD, we separately calculated the amount of training data and computational FLOPS required for OPD and SFT to achieve the same AIME accuracy. The amount of training data is the actual number of data points consumed. The FLOPS for SFT includes the floating-point computations required for forward/backward and teacher rollout. The FLOPS for OPD includes the floating-point computations required for student rollout, teacher prefill, and student forward/backward. Throughout the computation process, the student model was Llama3.1-8B-SFT, the teacher model for SFT and OPD was Qwen3-8B.
\begin{wrapfigure}{r}{0.60\textwidth}
    \vspace{-0.1em}
    \centering
    \includegraphics[width=\linewidth]{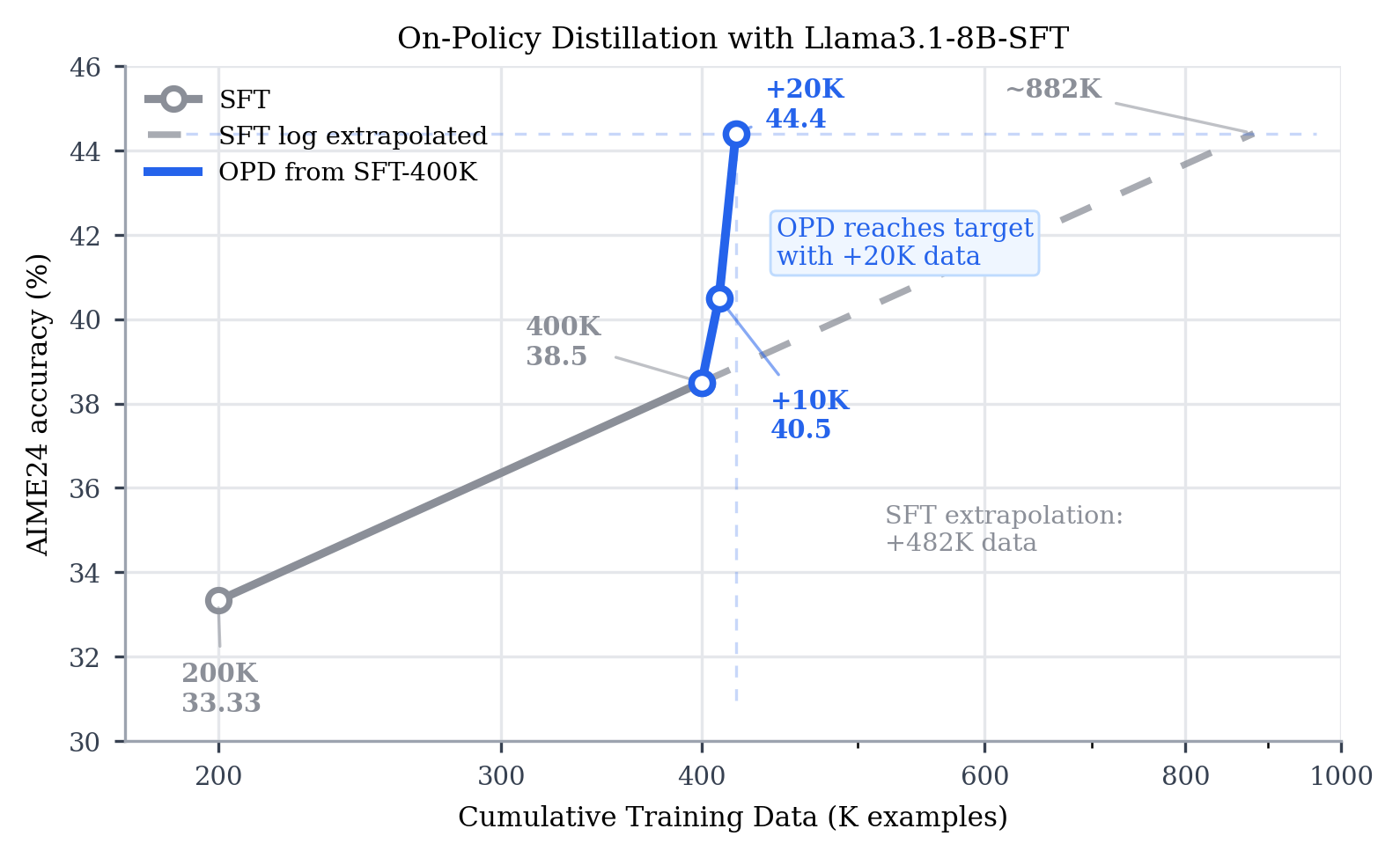}
    \caption{AIME24 compute frontier when distilling Llama3.1-8B-SFT using Qwen3-8B.}
    \label{fig:compute_frontier}
    \vspace{-1.0em}
\end{wrapfigure}
As shown in the figure, the gray line depicts the performance trend of checkpointing as the amount of SFT data increases. It was drawn using checkpoint performance data at 200K and 400K and based on empirically observed log-linearity~\cite{lu2025onpolicydistillation}. The extended dashed line is an extrapolation of this law. The blue curve shows the performance trend of checkpointing as the number of OPD training samples increases. We selected the checkpoint performance data of the model trained with OPD after 10K and 20K samples to plot this graph. Starting from SFT-400K, OPD improved the model's AIME24 accuracy to 40.5\% and 44.4\% with only 10K and 20K samples, respectively. This far exceeds the improvement of SFT with the same amount of data. Based on the log-linearity law, SFT would need approximately 480K more data to achieve the same performance.

Table ~\ref{tab:compute_cost} shows the floating-point computation counts estimated based on the amount of data and the computation path for each training method. As you can see, to achieve the same 44.4\% AIME24 accuracy from an SFT-400K initialized model, continuing with SFT requires approximately 482K additional training data points. The corresponding teacher rollout computation is approximately $6.0 \times 10^{20}$ FLOPs, the student training computation is approximately $5.7 \times 10^{20}$ FLOPs, and the total computation is approximately $1.17 \times 10^{21}$ FLOPs. In comparison, OPD requires only an additional 20K training samples, with teacher prefill computation costing approximately $2.5 \times 10^{19}$ FLOPs, student rollout and training computation costing approximately $2.3 \times 10^{19}$ FLOPs, and total computation costing approximately $4.8 \times 10^{19}$ FLOPs.

As can be seen, to achieve the same AIME24 performance, OPD uses only about 4.1\% of the training data and reduces the total training computation cost to approximately 4\% of the SFT extrapolation scheme. This significant difference demonstrates the substantial advantage of OPD. OPD provides dense teacher supervision signals based on the student's own sampling trajectory, unlike the SFT-based approach which forcibly aligns student responses with the teacher's generated thought processes. This ensures that students can freely explore in their most suitable direction while also guaranteeing the correctness of the optimization direction.

\begin{table}[t]
\centering
\caption{The estimated OPD and SFT computation FLOPs under the Qwen3-8B $\rightarrow$ Llama3.1-8B-SFT configuration. These FLOPs estimates are based on the model configurations of the teacher (Qwen3-8B) and student (Llama3.1-8B) models, assuming each sample contains 16K tokens.}
\vspace{0.45em}
\small
\setlength{\tabcolsep}{5pt}
\renewcommand{\arraystretch}{1.12}
\resizebox{\linewidth}{!}{
\begin{tabular}{lccccc}
\toprule
\textbf{Method} & \textbf{AIME24} & \textbf{Data} & \textbf{Teacher FLOPs} & \textbf{Student FLOPs} & \textbf{CE} \\
\midrule
Initialization: SFT-400K & 38.5\% & -- & -- & -- & -- \\
SFT extrapolated & 44.4\% & +482K & $6.0{\times}10^{20}$ & $5.7{\times}10^{20}$ & 1.0$\times$ \\
\textbf{SFT + OPD} & \textbf{44.4\%} & \textbf{+20K} & $2.5{\times}10^{19}$ & $2.3{\times}10^{19}$ & \textbf{24.1$\times$} \\
\bottomrule
\end{tabular}
}
\vspace{10pt}
\label{tab:compute_cost}
\end{table}

% Table~\ref{tab:compute_cost} reports the corresponding FLOPs estimate. For the SFT extrapolation baseline, we count the teacher-side generation/scoring cost required to obtain additional supervised data and the student-side training cost on that data. For OPD, we count teacher scoring on student rollouts together with the student rollout and update cost. Under this accounting, OPD costs approximately 3.3 $\times$ 10$^{19}$ FLOPs, whereas the SFT extrapolation costs approximately 8.0 $\times$ 10$^{20}$ FLOPs. Thus, for the same 44.4\% AIME24 target, cross-tokenizer OPD provides a 24.1$\times$ compute-efficiency improvement, showing that the proposed projection mechanism preserves the practical efficiency advantage of OPD even when teacher and student tokenizers differ.

\subsection{Case Study}
\label{sec:case_study}

We provide a qualitative example illustrating the process of assigning across tokenizers. Figure ~\ref{fig:chunk_alignment} shows the different tokenization results of Qwen and Llama for the same mathematical expression. Gray dashed boxes represent directly aligned (1:1) chunks, where the teacher's log probability can be applied directly; blue boxes represent mismatched chunks, requiring chunk-level reassignment.

This example demonstrates two directions of mismatch in a single response. For the substring \texttt{120}, Qwen generates three tokens (\texttt{1}, \texttt{2}, \texttt{0}), while Llama retains it as a single token; conversely, for \texttt{...]}, Qwen retains it as a single token, while Llama splits it into \texttt{...} and \texttt{]}. This bidirectional mismatch demonstrates that no fixed merge or split heuristic is sufficient to solve the problem, so the algorithm must first identify the synchronized word chunks and then allocate the teacher's log probability budget to each chunk.

As shown in the Figure ~\ref{fig:chunk_alignment}, for the mismatched word chunk \texttt{120}, the log probabilities $-1.81$, $-1.26$, and $-4.04$ of the three words on the Qwen side are merged into a single chunk budget of $-7.12$ and projected onto Llama's single \texttt{120} word; since the log probability of this word on the student side is $-9.21$, a positive advantage $2.09$ is obtained. Conversely, at \texttt{...]}, the budget of a single word on the Qwen side, $-15.91$, is redistributed to the two words in Llama's \texttt{...} and \texttt{]} based on the log probabilities of Llama's two words, resulting in $-7.56$ and $-8.35$ respectively, corresponding to advantages of $-0.63$ and $-0.70$.

\begin{figure}[t]
    \centering
    \makebox[\linewidth][c]{\hspace*{-0.04\linewidth}\includegraphics[width=0.97\linewidth]{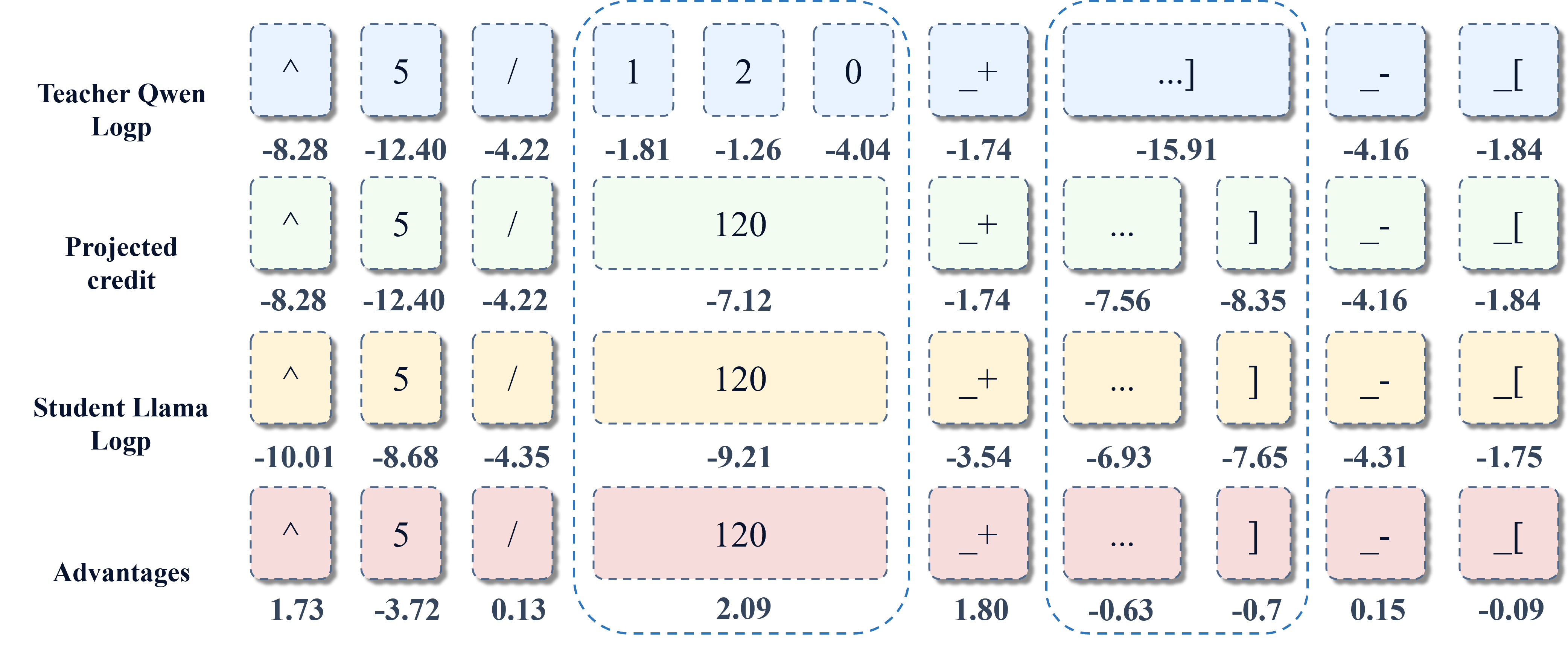}}
    \caption{Case study of cross-tokenizer credit assignment. From top to bottom, the rows show teacher log-probabilities, projected teacher log-probabilities, student log-probabilities, and the resulting advantages computed as projected teacher log-probabilities minus student log-probabilities. Blue groups are mismatched chunks; gray dashed groups are directly aligned chunks.}
    \label{fig:chunk_alignment}
\end{figure}

This demonstrates that this matching-projection process can handle both the merging of teacher multi-word units into student word units and the splitting of teacher word units into student multi-word units, ultimately generating a reasonable advantage signal for each student word unit.

\subsection{Distillation Gain: Same-Tokenizer vs. Cross-Tokenizer OPD}
\label{sec:same_vs_cross}

Finally, we compare the OPD gain under the same-tokenizer and cross-tokenizer settings to isolate the effect of tokenizer projection. We measure the improvement beyond each model's own SFT initialization rather than comparing absolute scores. This controlled comparison reveals whether the projection step decreases the incremental training signal of OPD.

As shown in Table~\ref{tab:same_vs_cross}, cross-tokenizer OPD (Qwen3 $\rightarrow$ Llama3.1-8B-SFT) achieves an average gain of 5.6, comparable to the 5.5 gain of same-tokenizer OPD (Qwen3 $\rightarrow$ Qwen3). The improvement is consistent across benchmark families: cross-tokenizer OPD gains +11.6 on AIME25 (substantially exceeding the same-tokenizer counterpart of +0.9), +5.9 on AIME24, +5.2 on AIME26, and positive gains on MATH (+1.8), GPQA-Diamond (+6.1), and LiveCodeBench (+3.0).

\begin{table}[htbp]
    \centering
    \caption{OPD gain comparison between same-tokenizer and cross-tokenizer settings. Gain is computed as OPD minus the corresponding SFT initialization.}
    \begin{tabular*}{\linewidth}{l@{\extracolsep{\fill}}ccc|ccc}
    \toprule[1.3pt]
    \textbf{Benchmark} & \multicolumn{3}{c|}{\textbf{Same Tokenizer}} & \multicolumn{3}{c}{\textbf{Cross Tokenizer}} \\
    \cmidrule(lr){2-4}\cmidrule(lr){5-7}
    & \textbf{SFT} & \textbf{OPD} & \textbf{Gain} & \textbf{SFT} & \textbf{OPD} & \textbf{Gain} \\
    \midrule
    AIME24 & 59.2 & 66.7 & \textbf{+7.5} & 38.5 & 44.4 & +5.9 \\
    AIME25 & 50.8 & 51.7 & +0.9 & 31.7 & 43.3 & \textbf{+11.6} \\
    AIME26 & 50.0 & 59.2 & \textbf{+9.2} & 36.5 & 41.7 & +5.2 \\
    MATH & 88.9 & 90.0 & +1.1 & 80.8 & 82.6 & \textbf{+1.8} \\
    GPQA-D & 37.1 & 48.4 & \textbf{+11.3} & 1.5 & 7.6 & +6.1 \\
    LCB & 29.3 & 32.3 & +3.0 & 21.8 & 24.8 & +3.0 \\
    \midrule
    Avg. & 52.6 & 58.1 & +5.5 & 35.1 & 40.7 & \textbf{+5.6} \\
    \bottomrule
    \end{tabular*}
    \label{tab:same_vs_cross}
\end{table}

These results demonstrate that tokenizer mismatch does not weaken OPD effectiveness. The comparable relative gain confirms that our alignment and projection procedure preserve the dense supervision signal without modifying either model's tokenizer or introducing additional vocabulary heads. Therefore, a shared vocabulary may not a prerequisite for effective OPD when teacher feedback can be projected onto student tokens in a log probability-conserving manner.

\section{Conclusion}

In this work, we present a cross-tokenizer On-Policy Distillation (OPD) framework that  allows OPD across different model families. We propose an adaptive dual-pointer chunk alignment algorithm and a semantic-prior-based credit assignment mechanism to enable the precise propagation of high-fidelity, token-level signals across different vocabularies. Our results demonstrate that cross-tokenizer OPD is significantly more compute-efficient and effective at capturing dense distributional knowledge. Our work expands the potential for teacher–student pairings and paves the way for more flexible knowledge transfer across diverse model families.

\textbf{Limitations and Broader Impacts}. While our method demonstrates significant improvements across various tasks, due to computational constraints, our experiments are limited to models under 8B parameters; validation on larger‑scale models (\emph{e.g.}, 70B) remains as future work. Our cross‑tokenizer OPD reduces the dependency on shared tokenizers, enabling more flexible teacher–student pairings.

\bibliography{reference}
\bibliographystyle{plain}

        %%%%%%%%%%%%%%%%%%%%%%%%%%%%%%%%%%%%%%%%%%%%%%%%%%%%%%%%%%%%%%%%%%%%%%%%%%%%%%%
        %%%%%%%%%%%%%%%%%%%%%%%%%%%%%%%%%%%%%%%%%%%%%%%%%%%%%%%%%%%%%%%%%%%%%%%%%%%%%%%
        % APPENDIX
        %%%%%%%%%%%%%%%%%%%%%%%%%%%%%%%%%%%%%%%%%%%%%%%%%%%%%%%%%%%%%%%%%%%%%%%%%%%%%%%
        %%%%%%%%%%%%%%%%%%%%%%%%%%%%%%%%%%%%%%%%%%%%%%%%%%%%%%%%%%%%%%%%%%%%%%%%%%%%%%%
        \newpage
        \appendix
        \onecolumn
\section{Details of Experimental Setups}\label{appendix:details}

        \begin{table*}[htbp]
        \centering
        \caption{Main training hyperparameters used for SFT initialization and subsequent OPD training.}
        \small
        \setlength{\tabcolsep}{6pt}
        \renewcommand{\arraystretch}{1.15}
        \begin{tabular}{p{0.27\textwidth}|p{0.31\textwidth}|p{0.31\textwidth}}
        \toprule[1.3pt]
        \textbf{Hyperparameter} & \textbf{SFT} & \textbf{OPD} \\
        \midrule
        Training framework & LLaMA-Factory & VeRL \\
        \midrule
        Teacher--student pairs & -- & Qwen3-8B $\rightarrow$ Llama3.1-8B-SFT; DeepSeek-R1 $\rightarrow$ DeepSeek-R1-Distill-Qwen-7B \\
        \midrule
        SFT base models & Meta-Llama-3.1-8B & corresponding SFT checkpoint \\
        \midrule
        Training data & OpenThoughts & DeepMath and OpenThoughts prompts \\
        \midrule
        Global batch size & 128 examples & 128 prompts \\
        \midrule
        Maximum sequence length & 16,384 tokens & 1,024 prompt tok ens + 16,384 response tokens \\
        \midrule
        Learning rate & $1.0 \times 10^{-4}$ & $1.0 \times 10^{-6}$ \\
        \midrule
        Optimizer & AdamW & AdamW \\
        \midrule
        Gradient clipping & 1.0 & 1.0 \\
        \midrule
        Precision & bf16 & bf16 \\
        \midrule
        Rollout sampling & -- & $T=1.0$, top-$p=1.0$, top-$k=-1$ \\
        \bottomrule
        \end{tabular}
        \label{tab:hyperparams}
        \end{table*}

\section{Details of Dataset and Baseline}\label{app:data}
The following are the details of the datasets used in the benchmark:
\begin{itemize}
     \item \textbf{MATH}~\cite{hendrycksmath2021} consists of 12,500 problems from high school math competitions, measuring the problem-solving ability of machine learning models.
    \item \textbf{LiveCodeBench}~\cite{jain2024livecodebench} provides holistic and contamination-free evaluation of coding capabilities of LLMs. We test models on code generation tasks where models are given a problem statement, which includes a natural language description and example tests (input-output pairs), and is tasked with generating a correct solution.
    \item \textbf{AIME}~\cite{aime26} is a set of math problems from the American Invitational Mathematics Examination.
    \item \textbf{GPQA-Diamond}~\cite{rein2023gpqagraduatelevelgoogleproofqa} is a comprehensive benchmark that evaluates graduate-level knowledge and reasoning capabilities across 285 disciplines.

\end{itemize}
% The following are the details of cross-tokenizer baselines:
% \begin{itemize}
%     \item \textbf{ALM}~\cite{minixhofer2025universalcrosstokenizerdistillationapproximate} uses surface-form blocks as bridging units between the teacher and student, passing teacher likelihoods to the student at the block level.
%     \item \textbf{CDM}~\cite{Li_2024} employs a context-dependent dynamic mapping strategy to construct finer-grained likelihood matching between heterogeneous tokenizers.
% \end{itemize}

\section{Proof}\label{app:proof}

\begin{theorem}[Completeness and Minimality of DPCA]
\label{thm:minimal_partition}
The DPCA algorithm ensures: 1.\textbf{Completeness}: It identifies the entire set of synchronization points between $y^S$ and $y^T$. 2.\textbf{Minimality}: The resulting partition $P^*$ consists of minimal synchronized chunks, such that no chunk within $P^*$ can be further subdivided into smaller synchronized sub-sequences.
\end{theorem}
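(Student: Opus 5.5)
The plan is to reduce everything to one object, the length function of decoded prefixes. For the student sequence set $a_i = |D(y^S_{1:i})|$ for $0\le i\le m$, and for the teacher sequence set $b_j = |D(y^T_{1:j})|$ for $0\le j\le k$. By the no-empty-token assumption and the strict monotonicity of $D$ stated in the Preliminaries, both sequences are strictly increasing. The concatenation identity gives $D(y^S_{1:i}) = D(y^S_{1:i'})\odot D(y^S_{i'+1:i})$. Since $y^T = T(D(y^S))$, the full decodings coincide, so the two prefixes $D(y^S_{1:i})$ and $D(y^T_{1:j})$ are both prefixes of the same string $x = D(y^S)=D(y^T)$.

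The first lemma is then that $(i,j)$ is a synchronization point (Definition~\ref{def:sync_point}) if and only if $a_i = b_j$. Two prefixes of the same string are equal exactly when their lengths agree. Strict monotonicity also shows that for each $i$ there is at most one matching $j$, so the synchronization points form a strictly increasing chain $(0,0)=(i_0,j_0)<(i_1,j_1)<\dots<(i_K,j_K)=(m,k)$.

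The second step describes DPCA in these terms. Start from a synchronization point $(p_S,p_T)$. Comparing $|\text{str}_S|$ with $|\text{str}_T|$ is the same as comparing $a_{p_S+i}$ with $b_{p_T+j}$, after subtracting the common offset $a_{p_S}=b_{p_T}$. The catch-up rule is therefore the standard merge of two sorted lists. I would prove the following invariant by induction on the number of pointer moves: when the rule advances the smaller side, no synchronization point lies strictly between the start of the chunk and the current pointer pair. Suppose $a_{p_S+i}<b_{p_T+j}$. Every teacher index $j'<p_T+j$ has already been passed, with $b_{j'}$ below some earlier student value or equal to a value that was not a match. Hence $a_{p_S+i}$ can equal no $b_{j'}$ with $j'\le p_T+j$, and the same argument works symmetrically. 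The loop must terminate because both sequences end at the common value $|x|$. The first equality reached is then exactly the next synchronization point $(i_{c+1},j_{c+1})$ after $(i_c,j_c)$.

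Completeness then follows by induction on chunks: the base pointers visit exactly $(i_0,j_0),(i_1,j_1),\dots,(i_K,j_K)$. For minimality, suppose a chunk $(y^S_{i_c+1:i_{c+1}}, y^T_{j_c+1:j_{c+1}})$ could be split at an interior pair $(i',j')$ with matching decodings on the first part. Concatenating with the prefix up to $(i_c,j_c)$ would make $(i',j')$ a synchronization point strictly between consecutive ones, which contradicts completeness.

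The main obstacle is the catch-up invariant: ruling out that advancing one pointer jumps over a match. The delicate case is when the algorithm is in the middle of a chunk where the two lengths were once equal but the strings differed. This cannot happen here, because both are prefixes of the same $x$, and I will state that explicitly. I would handle this step by arguing purely on the merged order of the two sorted length sequences, leaning on the equivalence from the first lemma.
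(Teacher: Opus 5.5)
Your proposal is correct and follows the same route as the paper. Completeness holds because the catch-up rule can never step past a synchronization point and the first-match rule stops exactly there, and minimality follows immediately from completeness. Your version is also more rigorous than the paper's, which simply asserts that the pointers ``eventually reach'' $(i^*,j^*)$. Three pieces of your argument supply the justification that step needs: the reduction to the strictly increasing length sequences $a_i,b_j$; the observation that $y^T=T(D(y^S))$ makes both decodings prefixes of the same string, so equal lengths force equal strings and the rule never stalls; and the sorted-merge invariant.
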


The \textit{completeness} guarantees that the algorithm never skips a synchronization point, regardless of the extent of the vocabulary mismatch.  The \textit{minimality} implies that each identified chunk is a minimal closure. Consequently, the resulting partition has the maximal possible number of chunks. This is crucial for OPD as it ensures that the teacher's log-probabilities are localized to the smallest possible token clusters, which avoids the ambiguous signal over excessively large chunks.

\begin{proof}
We prove this by contradiction and the property of greedy first-match.

\textit{Completeness}: Suppose there exists a synchronization point $(i^*, j^*) \in \mathcal{V}_{\text{sync}}$ that the algorithm fails to identify. Let $(i_r, j_r)$ be the last synchronization point identified by the algorithm such that $i_r < i^*$. During the subsequent search, the DPCA pointers advance based on the catch-up heuristic. Due to the strict monotonicity $|D(\bm{\tau} \odot \tau')| > |D(\bm{\tau})|$, the pointers will traverse the state space and eventually reach the indices $(i^*, j^*)$. At this state, the condition $D(y^S_{i_r:i^*}) = D(y^T_{j_r:j^*})$ is satisfied. Since the algorithm terminates the current chunk at the \textit{first} instance this equality holds, it must identify $(i^*, j^*)$ as a synchronization point, contradicting the assumption.

\textit{Minimality}: Since DPCA identifies every synchronization point in $\mathcal{V}_{\text{sync}}$ and partitions the sequences at every such point, any resulting chunk between $(i_k, j_k)$ and $(i_{k+1}, j_{k+1})$ contains no intermediate synchronization points by definition. Thus, each chunk is an atomic (minimal) synchronized unit.
\end{proof}

\end{document}